\documentclass{article}

\usepackage[preprint]{neurips_2026}

\usepackage{amsmath,amssymb,amsfonts}
\usepackage{amsthm}
\usepackage{mathtools}
\usepackage{bm}
\usepackage{graphicx}
\usepackage{subcaption}
\usepackage{booktabs}
\usepackage{microtype}
\usepackage{algorithm}
\usepackage{algpseudocode}
\usepackage{hyperref}
\usepackage{url}
\usepackage{enumitem}
\usepackage{cleveref}
\usepackage{natbib}
\usepackage{xcolor}
\usepackage{svg}

\usepackage{appendix}
\usepackage{titletoc}

\newcommand\DoToC{%
  \startcontents
  \printcontents{}{2}{\textbf{Contents}\vskip3pt\hrule\vskip5pt}
  \vskip3pt\hrule\vskip5pt
}

\newtheorem{proposition}{Proposition}

\theoremstyle{definition}
\newtheorem{definition}{Definition}
\theoremstyle{remark}

\newcommand{\R}{\mathbb{R}}
\newcommand{\E}{\mathbb{E}}

\newcommand{\grad}{\nabla}

\newcommand{\gr}[1]{\textcolor{black}{#1}}

\newcommand{\gdr}[1]{\textcolor{black}{#1}}

\title{TASER: Task-Aware Stein Regularisation for Geometry-Driven Robustness}

\author{Micha\l\ Kozyra \\
Department of Statistics, University of Oxford, United Kingdom \\
\texttt{michal.kozyra@seh.ox.ac.uk}
\And
Gesine Reinert \\
Department of Statistics, University of Oxford, United Kingdom \\
\texttt{reinert@stats.ox.ac.uk}
}

\begin{document}

\maketitle

\begin{abstract}
Modern deep networks remain fragile under distribution shift and adversarial perturbations, often due to excessive or poorly structured input sensitivity. We introduce \textbf{TASER (Task-Aware Stein Regularisation)}, a training-time regularisation framework derived from Langevin Stein operators. By penalising pointwise Stein residuals under the training distribution, TASER encourages geometric compatibility between predictors and data density, inducing anisotropic, data-aware smoothness. We provide theoretical links between Stein regularisation and reduced first-order shift sensitivity, develop scalable implementation variants compatible with modern architectures, and demonstrate improved robustness and stability across regression and vision benchmarks. Across CIFAR-10 experiments, TASER consistently improves the adversarial robustness of established training methods without incurring statistically significant clean-accuracy degradation. 
\end{abstract}
\section{Introduction}

Deep neural networks achieve strong in-distribution performance, yet remain fragile under distribution shift and adversarial perturbations \citep{hendrycks2019benchmarking, goodfellow2015explaining, madry2018towards}. A central failure mode underlying both phenomena is \emph{misaligned input sensitivity}: the predictor exhibits large responses to perturbations that are small with respect to the data distribution, while potentially underreacting to semantically meaningful variations \citep{tsipras2018robustness}. In adversarial settings, this manifests as the existence of directions in input space along which small perturbations induce large changes in model output \citep{goodfellow2015explaining}. In distribution shift, it leads to degraded generalisation when test inputs deviate from the training distribution in structured ways \citep{hendrycks2019benchmarking}.

A large body of work addresses this issue by regularising model sensitivity. Classical approaches include weight decay and spectral constraints \citep{loshchilov2019decoupled, miyato2018spectral}, while more direct methods penalise gradients or enforce Lipschitz bounds \citep{jakubovitz2018improving, cisse2017parseval}. Adversarial training further seeks robustness by optimising against worst-case perturbations \citep{madry2018towards, zhang2019theoretically}. Despite their success, these approaches share a common limitation: they treat all directions in input space uniformly or according to a fixed norm constraint. In particular, they do not explicitly incorporate the \emph{geometry of the training distribution}. As a result, they may suppress sensitivity in directions that are semantically meaningful while failing to adequately control sensitivity in directions that move inputs away from high-probability regions.

This work introduces a different approach: \emph{regularising model behaviour with respect to the geometry of the data distribution}. Our starting point is Stein’s method, which provides operators that characterise a probability distribution through identities of the form $\mathbb{E}_p[\mathcal{T}_p f]=0$ \citep{stein1972bound, ley2017stein}. For a distribution $p$ with score $s_p(x)=\nabla \log p(x)$, the Langevin Stein operator
\begin{equation}\label{eq:stein-operator}
\mathcal{L}_p f(x) = \Delta f(x) + s_p(x)^\top \nabla f(x)
\end{equation}
encodes the local geometry of $p$ through a combination of curvature and directional derivative terms.  Here $\Delta f(x)=\mathrm{tr}(\nabla^2 f(x))$ is the Laplacian, $\grad$ is the gradient, and $^\top$ denotes the transpose. \gdr{For each fixed function $f$ we call $r_f(x) = \mathcal{L}_p f(x)$ the (pointwise) {\it Stein residual} at $x$.}

\begin{figure}[H]
  \centering
  \includegraphics[width=\linewidth]{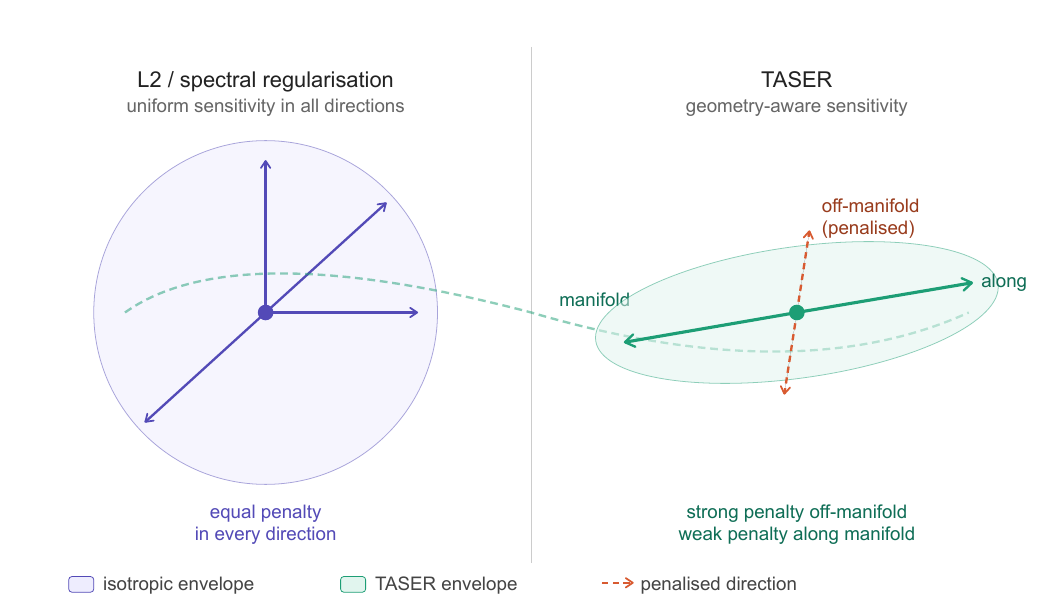}
  \caption{\textbf{Isotropic versus geometry-aware smoothness.}
Isotropic versus geometry-aware smoothness. Standard regularisers (left) enforce a uniform penalty on model sensitivity, treating all input directions equally. TASER (right) induces an anisotropic smoothness envelope aligned with the data manifold: sensitivity along the manifold is largely unconstrained, while sensitivity in the off - manifold direction aligned with the score field $\nabla \log p(x)$ is strongly penalised.}
  \label{fig:my_figure}
\end{figure}

We propose \textbf{TASER (Task-Aware Stein Regularisation)}, a training-time regularisation framework that penalises pointwise Stein residuals:
\begin{equation}
\mathcal{L}_{\mathrm{total}}(\theta)
=
\mathcal{L}_{\mathrm{task}}(\theta)
+
\lambda\,\mathbb{E}_{X\sim p}\big[(\mathcal{L}_p f_\theta(X))^2\big].
\end{equation}
Unlike conventional regularisers that act uniformly across input space, TASER imposes constraints that are explicitly shaped by the distribution $p$. In particular, the score-weighted term $s_p(x)^\top \nabla f(x)$ \gdr{in $\mathcal{L}_p$} penalises sensitivity along directions in which the data density changes most rapidly, while the Laplacian term controls curvature globally. Together, they enforce a form of \emph{geometry-aware smoothness} that aligns model sensitivity with the structure of the data.

This perspective is particularly natural in high-dimensional settings where data concentrate near lower-dimensional structures \citep{fefferman2016testing}. In such regimes, directions of steepest density change tend to be orthogonal to regions of high probability mass, and TASER suppresses sensitivity along these directions without requiring explicit manifold estimation. This provides a principled mechanism for reducing off-distribution sensitivity, which is a key driver of adversarial vulnerability \citep{fawzi2018analysis, gilmer2018adversarial}.

From a theoretical standpoint, TASER admits a direct robustness interpretation. The Stein residual governs the first-order response of the model under smooth perturbations of the data distribution. In particular, for exponential tilts of the form $q_\varepsilon(x)\propto p(x)e^{\varepsilon h(x)}$, the expectation of $\mathcal{L}_p f$ under $q_\varepsilon$ scales with the covariance between the Stein residual and the perturbation $h$. Minimising the variance of $\mathcal{L}_p f$ therefore directly bounds first-order sensitivity to a broad class of distributional shifts.

TASER is simple to implement and broadly applicable. It requires only access to input gradients and an estimate of the score field, which can be obtained from modern diffusion or score-matching models \citep{ho2020denoising, song2021score}. The method is agnostic to architecture and task, and can be combined with existing training pipelines, including adversarial training.

\paragraph{Contributions.}
This work makes the following contributions:
\begin{itemize}[noitemsep, leftmargin=*]
\item We introduce TASER, a Stein-operator-based regularisation framework that enforces geometry-aware constraints on model sensitivity.
\item We show that TASER penalises directional derivatives aligned with the data distribution, providing a principled alternative to isotropic gradient regularisation.
\item We establish a theoretical connection between Stein residual minimisation and reduced first-order sensitivity under distributional perturbations.
\item We demonstrate that TASER provides a natural mechanism for improving adversarial robustness by suppressing sensitivity in directions that move inputs away from high-density regions.
\end{itemize}

More broadly, TASER reframes Stein operators as tools for \emph{training}, and provides a bridge between generative modelling (via score estimation) and discriminative robustness.

\section{Related Work}

\paragraph{Regularising model sensitivity.}
Controlling the sensitivity of neural networks with respect to their inputs is a central theme in improving robustness and generalisation. Classical approaches such as weight decay and spectral constraints \citep{loshchilov2019decoupled, miyato2018spectral} limit sensitivity indirectly through parameter norms, while more direct methods penalise input gradients, for example via Jacobian norm regularisation \citep{jakubovitz2018improving, cisse2017parseval}. These techniques enforce smoothness of the predictor in the ambient input space and are typically agnostic to the underlying data distribution. As a result, they impose uniform constraints across all directions, without distinguishing between variations that are consistent with the data distribution and those that correspond to unlikely or off-distribution perturbations.

\paragraph{Adversarial training and robust optimisation.}
Adversarial training and robust optimisation methods address sensitivity by explicitly optimising model performance under worst-case perturbations within a prescribed norm ball \citep{madry2018towards, goodfellow2015explaining}. Extensions such as TRADES and related formulations further explore the trade-off between robustness and accuracy \citep{zhang2019theoretically}. While these approaches have demonstrated strong empirical robustness, they require solving a challenging inner maximisation problem and rely on a choice of perturbation set, most commonly defined in terms of $\ell_p$ norms. This dependence can limit generalisability, as robustness is often tied to the specific class of perturbations seen during training. Moreover, such formulations do not explicitly encode the geometry of the data distribution and may over-regularise directions that are not relevant for typical data variations.

\paragraph{Score-based models and diffusion.}
Score-based and diffusion models provide scalable methods for estimating the score field $\nabla \log p(x)$ in high dimensions \citep{ho2020denoising, song2021score}. These models have primarily been used for generative modelling, where the score defines a vector field that drives a stochastic process from noise toward the data distribution. Beyond generation, the score field encodes local geometric information about the data distribution, capturing directions of steepest density variation. This representation provides a natural bridge between generative modelling and geometric regularisation.


\paragraph{Synthetic data and diffusion-based robustness.}
\gr{\citep{gowal2021improving, nie2022diffusion} explore} 
the use of generative models for improving robustness by augmenting training with synthetic data or by performing adversarial training in latent or generative spaces.  
These approaches 
leverage the learned data distribution to produce more realistic perturbations or to enrich the training set with diverse samples. 
More recent work based on diffusion models uses generative priors for adversarial purification or sample generation \citep{nie2022diffusion}.

\paragraph{Stein’s method in machine learning.}
Stein operators have been widely used in machine learning for goodness-of-fit testing, sample quality evaluation, and kernel-based discrepancy measures\gdr{, for a survey see}  \citep{
liu2026probabilistic}. These methods exploit identities of the form $\mathbb{E}_p[\mathcal{T}_p f]=0$ to construct statistics that detect deviations from a target distribution. More recently, Stein-based quantities have been explored as diagnostic tools for detecting distribution shift and model misspecification \citep{kozyra2026tastetaskawareoutofdistributiondetection}. However, their use has largely remained in a post hoc setting, where the operator is evaluated after training rather than used to shape the training process itself.

\paragraph{Summary.}
\gr{Summarising, e}xisting approaches to robustness either enforce uniform smoothness or optimise against predefined perturbation sets, while recent generative approaches rely on expensive and often problem-specific pipelines. Score-based models provide a continuous, local representation of data geometry that is both expressive and scalable. This work builds on these developments by using Stein operators as a training-time regularisation mechanism, directly coupling model sensitivity to the geometry of the training distribution through the score field, while retaining a simple and modular integration with existing training methods.

\section{Methods: Task-Aware Stein Regularisation (TASER)}

\subsection{Problem setting and Stein formulation}

Consider a supervised learning problem with input $x \in \mathbb{R}^d$ drawn from a distribution $p$ and target $y$. Let $f_\theta : \mathbb{R}^d \to \mathbb{R}^m$ denote a model. The objective is to learn $f_\theta$ such that it generalises beyond the training distribution and remains stable under structured perturbations of the input.

A central difficulty is that standard training objectives impose no constraint on how the predictor behaves relative to the geometry of the data distribution. In particular, the gradient $\nabla f_\theta(x)$ may align with directions in which the density $p(x)$ changes rapidly, leading to large output variations under perturbations that move inputs away from high-probability regions.

To address this, we introduce a regularisation principle based on Stein operators. Let $p$ be a distribution with differentiable (possibly unnormalised) density and score function $s_p(x) = \nabla \log p(x)$. For 
the Langevin Stein operator 
$\mathcal{L}_p f(x)
=
\Delta f(x) + s_p(x)^\top \nabla f(x) 
$ \gr{as in \eqref{eq:stein-operator}},
\gr{u}nder standard regularity conditions, the Stein identity
$
\mathbb{E}_{X\sim p}[\mathcal{L}_p f(X)] = 0
$ 
holds 
\gr{(see Appendix \ref{app:proofs})}. TASER uses this identity as a training principle, penalising deviations from it at the sample level:
\begin{equation}
\label{eq:taser-objective}
\mathcal{L}_{\mathrm{total}}(\theta)
=
\mathcal{L}_{\mathrm{task}}(\theta)
+
\lambda\,
\mathbb{E}_{X\sim p}
\big[
(\mathcal{L}_p f_\theta(X))^2
\big].
\end{equation}

Since $\mathbb{E}_p[\mathcal{L}_p f]=0$, the penalty corresponds to the variance under the training distribution of the \gdr{(pointwise)} {\it Stein residual} 
\begin{equation}
\label{eq:residual}
r_f(x)
=
\mathcal{L}_p f(x)
=
\Delta f(x) + s_p(x)^\top \nabla f(x). 
\end{equation}

\subsection{Structure of the Stein residual}

The Stein residual \gr{$r_f$ in \eqref{eq:residual}} 
combines two complementary effects: curvature and directional sensitivity. 
The term $s_p(x)^\top \nabla f(x)$ measures how the predictor changes along directions in which the data density varies most rapidly. In high-dimensional settings where data concentrate near lower-dimensional structures, these directions tend to be orthogonal to regions of high probability mass. \gr{Decomposing $\nabla f$ into a normal and a orthogonal component,} 
\[
\nabla f(x) = \Pi_T(x)\nabla f(x) + \Pi_N(x)\nabla f(x),
\]
the score-weighted term predominantly probes the normal component $\Pi_N(x)\nabla f(x)$, corresponding to deviations from typical data configurations. Penalising this term therefore suppresses sensitivity in directions that move inputs away from the data distribution. See Figure~\ref{fig:my_figure} for a schematic visualisation.

The Laplacian $\Delta f(x)$ plays a complementary role by controlling curvature. It prevents the predictor from compensating for large directional derivatives through oscillatory behaviour, and enforces smoothness across all directions.

An equivalent formulation is given by the divergence identity
$
\mathcal{L}_p f(x)
=
\frac{1}{p(x)} \nabla \cdot \big(p(x)\nabla f(x)\big);
$  
the Stein residual measures the divergence of the density-weighted sensitivity field $p(x)\nabla f(x)$. Minimising this quantity enforces compatibility between the predictor and the geometry of $p$.

\paragraph{Practical variants}

The full Stein operator provides the most faithful representation of this interaction, but can be computationally expensive. Two practical variants are therefore considered.
First, the Laplacian term can be estimated efficiently using stochastic trace estimators such as Hutchinson's method \citep{hutchinson1989stochastic}:
$
\Delta f(x)
\approx
\frac{1}{K}\sum_{k=1}^K v_k^\top \nabla^2 f(x)\,v_k.
$  
Second, a first-order approximation can be obtained by omitting the Laplacian:
\begin{equation}
\label{eq:first-order}
r^{(1)}_f(x)
=
s_p(x)^\top \nabla f(x).
\end{equation}
This retains the core geometric effect - penalising sensitivity aligned with the score field - while significantly reducing computational cost.

\paragraph{Approximate scores and centering}

In practice, the score function $s_p(x)$ is replaced by an estimate $\tilde{s}(x)$, for example obtained from a diffusion model \citep{ho2020denoising, song2021score}. The Stein identity then holds only approximately, introducing a bias in the residual.
To account for this, we centre the residual:
\[
\tilde{r}_f(x)
=
\mathcal{L}_{\tilde{p}} f(x) - D_f,
\qquad
D_f \approx \mathbb{E}_p[\mathcal{L}_{\tilde{p}} f(X)].
\]
The centering constant can be estimated globally using a calibration set or within each minibatch. This removes systematic bias and focuses the regulariser on variability of the Stein residual.

\begin{algorithm}[t]
\caption{Task-Aware Stein Regularisation (TASER)}
\label{alg:taser}
\begin{algorithmic}[1]
\Require Training set $\mathcal D$, model output (or probe) $f_\theta$, score $\tilde s(x)$, weight $\lambda$, number of Hutchinson probes $K$, boolean \texttt{detach\_mean}
\For{each training step}
    \State Sample minibatch $\{(x_i,y_i)\}_{i=1}^B$
    \State Compute task loss $\mathcal L_{\mathrm{task}}$
    \For{each $x_i$}
        \State Compute gradient $\nabla f_\theta(x_i)$
        \State Estimate $\Delta f_\theta(x_i)$ using $K$ Hutchinson probes
        \State $r_i \gets \Delta f_\theta(x_i) + \tilde s(x_i)^\top \nabla f_\theta(x_i)$
    \EndFor
    \State $\bar r \gets \frac{1}{B}\sum_i r_i$
    \If{\texttt{detach\_mean}}
        \State $\bar r \gets \mathrm{stopgrad}(\bar r)$
    \EndIf
    \State $\mathcal R \gets \frac{1}{B}\sum_i (r_i - \bar r)^2$
    \State Update $\theta$ using $\mathcal L_{\mathrm{task}} + \lambda \mathcal R$
\EndFor
\end{algorithmic}
\end{algorithm}
\subsection{Choice of probe function.}

For vector-valued models $f_\theta:\mathbb{R}^d\to\mathbb{R}^m$, TASER is applied to a scalar \emph{probe} function derived from the model. The probe should satisfy three properties: (i) it should not saturate around training data, since TASER relies on local sensitivity information; consequently, softmax probabilities are typically a poor choice; (ii) it should be relevant to robustness and decision boundaries; and (iii) it should remain numerically stable automatic differentiation.

Empirically, we obtain the strongest results using a smooth logit-margin probe. For logits $z\in\mathbb{R}^K$ and label $y$, we define
\[
m(x;y)
=
\operatorname{LSE}_{j\neq y}(z_j)-z_y,
\qquad
\operatorname{LSE}_{j\neq y}(z_j)
=
\log\sum_{j\neq y} e^{z_j}.
\]
This is a smooth approximation of the multiclass margin $\max_{j\neq y} z_j-z_y$, replacing the hard maximum with log-sum-exp. The TASER penalty is then applied to $m(x;y)$.

\subsection{TASER fine-tuning}

TASER can be applied either during training or as a post hoc fine-tuning step. In the latter case, given a pretrained model $f_{\theta_0}$, we optimise
\begin{equation}
\mathcal{L}
=
\mathcal{L}_{\mathrm{base}}(x,y)
+
\alpha\,\mathrm{KL}\!\left(f_{\theta_0}(x)\,\|\,f_\theta(x)\right)
+
\lambda(t)\,\mathcal{R}_{\mathrm{TASER}}(x).
\end{equation}

The Stein penalty is always computed on clean inputs $x$, even when the base loss uses adversarial examples $x_{\mathrm{adv}}$. This ensures that the regularisation remains aligned with the score field of the training distribution. The regularisation weight $\lambda(t)$ is ramped during training, and the learning rate follows a warmup and cosine decay schedule.

In settings where the original training procedure is difficult to reproduce - e.g.\ due to reliance on custom techniques, complex data pipelines, or synthetic data augmentation - the base loss can be replaced with a standard, well-understood robust objective such as TRADES \citep{zhang2019theoretically}. This provides a practical and reproducible alternative while retaining compatibility with the TASER regularisation framework.

\section{Theoretical Analysis}

\subsection{Weighted Sobolev formulation}

The TASER penalty can be interpreted as a Sobolev-type regulariser induced by the Langevin Stein operator. \gr{To see this, l}et
\[
\mathcal{L}_p f(x)
=
\Delta f(x) + s_p(x)^\top \nabla f(x),
\qquad
s_p(x)=\nabla \log p(x),
\]
and assume that $p$ is smooth, strictly positive, and that boundary terms vanish under integration by parts. Then the following identity holds:
\begin{equation}
\label{eq:sobolev-stein-identity}
\mathbb{E}_{p}\!\left[
(\mathcal{L}_p f(X))^2
\right]
=
\mathbb{E}_{p}\!\left[
\|\nabla^2 f(X)\|_F^2
\right]
+
\mathbb{E}_{p}\!\left[
\nabla f(X)^\top H_p(X)\nabla f(X)
\right],
\end{equation}
where
$
H_p(x) = -\nabla^2 \log p(x)
$ \gr{and $ \| \cdot \|_F$ is the Frobenius norm}; see Appendix \ref{app:proofs} for details. 
Thus, the TASER objective defines a distribution-dependent Sobolev quadratic form:
\begin{equation}
\label{eq:weighted-sobolev-form}
\|f\|_{H^2_{p,*}}^2
:=
\mathbb{E}_{p}\!\left[
\|\nabla^2 f(X)\|_F^2
\right]
+
\mathbb{E}_{p}\!\left[
\nabla f(X)^\top H_p(X)\nabla f(X)
\right]. 
\end{equation}
The first term controls curvature of the predictor, while the second term controls gradients through a position-dependent metric determined by the curvature of the log-density.

This contrasts with standard Sobolev regularisation, which uses isotropic derivative penalties such as
$
\mathbb{E}_p[\|\nabla f(X)\|^2]
$ and $
\mathbb{E}_p[\|\nabla^2 f(X)\|_F^2].$ 
In TASER, first-order sensitivity is weighted by $H_p(x)$ rather than by the identity matrix. Consequently, gradients are penalised anisotropically according to the local geometry of the data distribution.

In the strongly log-concave case,
$m I \preceq H_p(x) \preceq M I$ for all $x$, the TASER penalty is equivalent to a second-order Sobolev seminorm under $p$. Specifically, for the standard Gaussian distribution, $p=\mathcal{N}(0,I)$, one has $H_p(x)=I$, and therefore
\begin{equation}
\label{eq:gaussian-sobolev}
\mathbb{E}_{p}\!\left[
(\mathcal{L}_p f)^2
\right]
=
\mathbb{E}_{p}\!\left[
\|\nabla^2 f\|_F^2
\right]
+
\mathbb{E}_{p}\!\left[
\|\nabla f\|^2
\right].
\end{equation}
In this case, TASER reduces exactly to a classical second-order Sobolev penalty.
More generally, \eqref{eq:sobolev-stein-identity} shows that TASER induces a Sobolev geometry adapted to the data distribution. The curvature term controls oscillatory behaviour, while the gradient term penalises sensitivity in directions where the log-density has high curvature. This provides an analytic counterpart to the geometric interpretation of TASER as a data-dependent smoothness regulariser.

\subsection{Stability under distribution shift}

We next interpret TASER as controlling the response of the predictor under shifted or adversarial input distributions. Let $q$ denote the distribution of perturbed inputs, for example the distribution induced by applying an attack mechanism to samples from $p$. Assume that $q$ is absolutely continuous with respect to $p$. 
\gr{Denote the} $\chi^2$ divergence between $q$ and $p$ \gr{by}
$
\chi^2(q\|p)
=
\mathbb{E}_{p}\!\left[
\left(\frac{q(X)}{p(X)}-1\right)^2
\right].
$
\gr{Letting}
$ 
\mathcal{Q}_\rho
=
\left\{
q : \chi^2(q\|p) \le \rho^2
\right\},
$ 
we show in Appendix \ref{app:proofs} that 
\begin{equation}
\label{eq:sup-shift-bound}
\sup_{q\in \mathcal{Q}_\rho}
\left|
\mathbb{E}_{q}[\mathcal{L}_p f(X)]
\right|
\le
\rho
\sqrt{
\mathbb{E}_{p}\!\left[
(\mathcal{L}_p f(X))^2
\right]
}.
\end{equation}
Thus, minimising the TASER penalty controls the worst-case Stein response over a neighbourhood of the training distribution. 
\gr{The left-hand side} \gdr{of \eqref{eq:sup-shift-bound}} 
has a geometric interpretation through the 
identity 
\begin{equation}
\label{eq:projection-identity-shift}
\mathbb{E}_{q}[\mathcal{L}_p f(X)]
=
-
\mathbb{E}_{q}
\left[
\nabla f(X)^\top
\nabla \log \frac{q(X)}{p(X)}
\right],
\end{equation}
(see \citep{kozyra2026tastetaskawareoutofdistributiondetection}), 
which holds under the same regularity assumptions as \eqref{eq:projection-identity-shift}. The term $\nabla \log(q/p)$ describes the local direction of the distributional shift from $p$ to $q$. Therefore, the Stein functional measures the average alignment between the predictor sensitivity $\nabla f$ and the shift direction. Bound \eqref{eq:sup-shift-bound} shows that TASER suppresses this alignment uniformly over all distributions in a $\chi^2$ neighbourhood of $p$.

It is important to note that even if $q$ is the distribution of adversarial examples, this does not constitute a formal bound on adversarial classification error. Instead, it controls a task-aware sensitivity functional associated with the attack-induced distribution. In this sense, TASER provides an attack-agnostic stability guarantee: no admissible shifted distribution can induce a large Stein response unless either the shift is far from the training distribution or the TASER penalty itself is large.

\section{Experimental Results}
\label{sec:experiments}

\subsection{Toy 1D regression: extrapolation under distribution shift}
\label{sec:toy_1d}

We illustrate Stein regularisation in a controlled one-dimensional regression setting. Inputs are sampled from $x \sim \mathcal{N}(0,1)$ with target $y = \sin(x)$. A small fully-connected network is trained with either standard $\ell_2$ weight decay or TASER, which penalises the squared Stein residual
\[
\mathcal{L}_p f(x) = f''(x) - x f'(x),
\]
for $p=\mathcal{N}(0,1)$. Figure~\ref{fig:toy_forecasts} shows predictions over a wide input range. Both methods fit the training distribution well, with TASER matching $\ell_2$ performance across $\lambda$. However, their extrapolation differs considerably: $\ell_2$ regularisation produces unstable and often diverging behaviour, highly sensitive to $\lambda$, whereas TASER yields smooth and consistent extrapolations across a wide range of regularisation strengths.

\begin{figure}[t]
\centering
\includegraphics[width=0.85\linewidth]{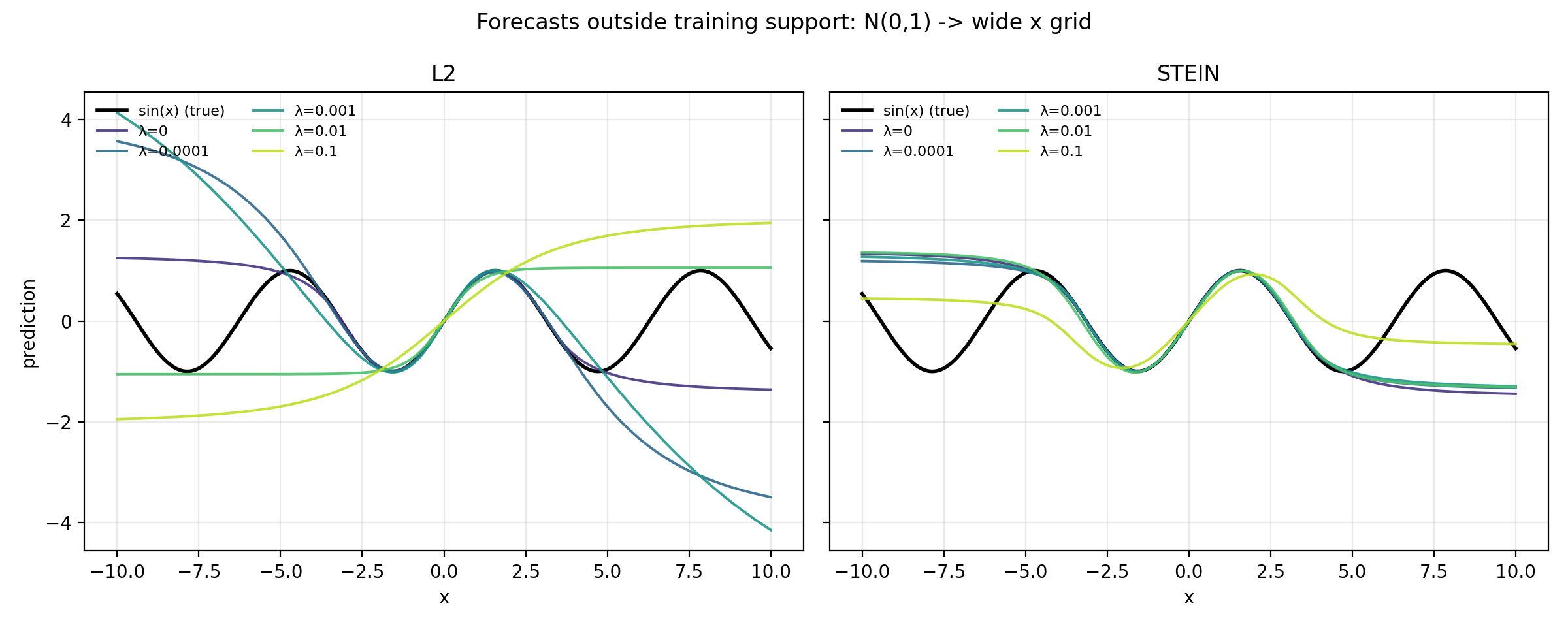}
\caption{
Forecasts outside the training distribution for 1D regression. Models are trained on $x \sim \mathcal{N}(0,1)$ and evaluated on a wide input grid. TASER regularisation yields substantially more stable extrapolation compared to $\ell_2$.
}
\label{fig:toy_forecasts}
\end{figure}

This behaviour is reflected 
in Table~\ref{tab:toy_mse}, which reports mean squared error (MSE) on both in-distribution samples and a wide grid. While 
$\ell_2$ regularisation achieves low in-distribution error, its out-of-distribution performance is highly sensitive to $\lambda$ and often degrades substantially. In contrast, TASER maintains comparable in-distribution performance while consistently improving out-of-distribution error and exhibiting significantly greater robustness to the choice of regularisation strength.

\begin{table}[t]
\centering
\begin{tabular}{lcccc}
\toprule
$\lambda$ & L2 (ID) & L2 (OOD) & Stein (ID) & Stein (OOD) \\
\midrule
0      & 4.19e-05 & 1.267 & - & - \\
1e-4   & 2.37e-04 & 5.335 & 1.85e-06 & 1.225 \\
1e-3   & 1.80e-03 & 5.407 & 2.68e-05 & 1.245 \\
1e-2   & 1.29e-02 & 1.158 & 5.95e-04 & 1.336 \\
1e-1   & 6.00e-02 & 2.546 & 2.04e-02 & 0.540 \\
\bottomrule
\end{tabular}
\vspace{2pt}
\caption{
Test MSE under in-distribution (ID) and wide-range inputs (OOD). TASER matches $\ell_2$ performance in-distribution while significantly improving out-of-distribution behaviour and stability across regularisation strengths. \gdr{Stein regularisation requires $\lambda \ne 0,$ hence the missing entries.}
}
\label{tab:toy_mse}
\end{table}

These results highlight that Stein regularisation induces a qualitatively different inductive bias: rather than penalising parameters uniformly, it suppresses sensitivity in directions that are inconsistent with the data distribution, leading to improved extrapolation without sacrificing in-distribution performance.

\subsection{TASER during training}

\paragraph{Setup.}
We first evaluate TASER as a training-time regulariser in a controlled setting on CIFAR-10 using a ResNet-18 backbone \citep{he2016deep}. We consider a representative set of standard and adversarial training methods: MART \citep{wang2020improving}, TRADES \citep{zhang2019theoretically} and Adversarial Weight Perturbation (AWP) \citep{wu2020adversarial}.

For each method, we train two variants: (i) the baseline model using the original objective, and (ii) the same model trained with TASER added to the loss throughout training. This allows us to isolate the effect of TASER as a distribution-aware regulariser.

\begin{table}[t]
\centering
\caption{
Effect of TASER on clean and robust accuracy on CIFAR-10 (ResNet-18).
Accuracies and changes are in percentage points.
}
\label{tab:taser_tradeoff}
\setlength{\tabcolsep}{4pt}
\resizebox{\linewidth}{!}{%
\begin{tabular}{l
                cc r
                cc r
                c}
\toprule
& \multicolumn{3}{c}{\textbf{Clean acc.}}
& \multicolumn{3}{c}{\textbf{Robust acc. avg.}}
& \multicolumn{1}{c}{\textbf{Overhead}} \\
\cmidrule(lr){2-4} \cmidrule(lr){5-7}
Method
& No TASER
& +TASER
& $\Delta$
& No TASER
& +TASER
& $\Delta$
& \\
\midrule

Vanilla
& 77.88 & 70.92 & $-6.96$ {\scriptsize$[-8.15,-5.75]$}
& 3.00 & 19.25 & $\mathbf{+16.25}$ {\scriptsize$[14.40,18.10]$}
& $\times 2.45$ \\

PGD
& 66.03 & 65.64 & $\mathbf{-0.39}$ {\scriptsize$[-1.68,0.94]$}
& 24.25 & 33.35 & $\mathbf{+9.10}$ {\scriptsize$[6.35,11.90]$}
& $\times 1.25$ \\

TRADES
& 67.07 & 65.61 & $\mathbf{-1.46}$ {\scriptsize$[-2.78,-0.15]$}
& 27.35 & 33.95 & $\mathbf{+6.60}$ {\scriptsize$[3.70,9.40]$}
& $\times 1.27$ \\

MART
& 64.07 & 63.31 & $-0.76$ {\scriptsize$[-2.09,0.57]$}
& 26.35 & 35.00 & $\mathbf{+8.65}$ {\scriptsize$[5.85,11.50]$}
& $\times 1.19$ \\

TRADES + AWP
& 67.90 & 68.87 & $\mathbf{+0.97}$ {\scriptsize$[-0.31,2.24]$}
& 34.05 & 36.20 & $+2.15$ {\scriptsize$[-0.80,5.10]$}
& $\times 1.21$ \\

MART + AWP
& 66.96 & 66.14 & $\mathbf{-0.82}$ {\scriptsize$[-2.14,0.45]$}
& 32.30 & 37.40 & $\mathbf{+5.10}$ {\scriptsize$[2.20,8.05]$}
& $\times 1.17$ \\

\midrule
\textbf{Avg.}
& 68.32 & 66.75 & $-1.57$ {\scriptsize$[-2.10,-1.04]$}
& 24.55 & 32.53 & $\mathbf{+7.98}$ {\scriptsize$[6.87,9.09]$}
& -- \\

\textbf{Avg. (excl. vanilla)}
& 66.41 & 65.91 & $\mathbf{-0.49}$ {\scriptsize$[-1.09,0.09]$}
& 28.86 & 35.18 & $\mathbf{+6.32}$ {\scriptsize$[5.06,7.60]$}
& -- \\

\bottomrule
\end{tabular}%
}
\vspace{2pt}
\caption*{
Robust accuracy is the average of AutoAttack and SPSA accuracy, with both attacks constructed using $\ell_\infty$ and $\epsilon=8/255$. Bootstrap confidence intervals (95\%) for performance deltas are shown in brackets. Result in bold indicate statistical difference from zero for robustness, and lack of statistical difference for clean accuracy. TASER consistently improves robust accuracy, while the average clean-accuracy degradation becomes statistically negligible when excluding the standard (non-adversarially trained) model.
}
\end{table}

\paragraph{Results.}
Robustness is evaluated using AutoAttack \citep{croce2020reliable} and SPSA \citep{uesato2018adversarial} (both using $\ell_\infty$ and $\epsilon=8/255$). A detailed accuracy breakdown between attacks can be found in \gr{Appendix \ref{app:add}}. 
Across all training objectives, TASER improves average robust accuracy by
$+7.98$ percentage points while incurring only a $-1.57$ point change in clean
accuracy on average. The gains are largest for the standard model
($+16.25$ robust points), but remain substantial for adversarially trained
models: across PGD, TRADES, MART, and AWP, TASER improves average robust
accuracy by $+6.32$ points while changing clean accuracy by only $-0.49$ points
on average, with the clean accuracy drop not statistically significant based on bootstrapping.

These results indicate that TASER acts as a complementary robustness mechanism
rather than a replacement for adversarial training. Its consistent gains across
standard, PGD, TRADES, and TRADES+AWP objectives suggest that the TASER
regulariser captures directions of task-relevant sensitivity that are not fully
controlled by conventional adversarial training.

\paragraph{Runtime.}
TASER introduces additional computational overhead due to derivative computations. The first-order variant requires an additional backward pass, while the full operator involves Hessian-vector products (implemented via Hutchinson estimators). Table~\ref{tab:taser_tradeoff} reports training time breakdown on CIFAR-10. 
Despite this overhead, TASER remains practical, and the additional incurred cost is considerably smaller than that of classical adversarial training.

\section{Limitations and Discussion}

\paragraph{Dependence on score quality.}
TASER relies on an estimate of the score function $\nabla \log p(x)$, which in practice is obtained from a separate model such as a diffusion or score-matching network. The effectiveness of the regulariser therefore depends on the quality of this estimate. Inaccurate scores introduce a bias term in the Stein operator, which can distort the intended geometric effect and lead to suboptimal regularisation. Data augmentation techniques typically used in diffusion frameworks might additionally contribute the mismatch between the true score function and the model estimate. While centering and variance-based formulations mitigate global bias, they do not eliminate input-dependent errors. Improving score estimation or designing robustness to score misspecification remains an important direction for future work.

\paragraph{Computational overhead.}
TASER introduces additional computational cost due to gradient and, in the full formulation, second-order derivative computations. While the first-order variant is relatively lightweight, the full Stein operator requires estimating the Laplacian via Hessian--vector products, which can be expensive in high dimensions. In this work, we mitigate this overhead by applying TASER primarily during a fine-tuning stage, but the cost may still be significant for large-scale models or datasets.

\paragraph{Interaction with adversarial training.}
TASER is designed to complement adversarial training, but its interaction with existing defence methods is not fully understood. In particular, adversarial training optimises worst-case behaviour within a predefined perturbation set, whereas TASER regularises sensitivity relative to the data distribution. These objectives are not identical and may, in some regimes, compete or over-regularise the model. A more systematic study of how TASER interacts with different threat models and attack families would be valuable.

\paragraph{Generality of robustness improvements.}
Although TASER improves robustness across a range of attacks in our experiments, it does not provide formal guarantees against worst-case perturbations. The method primarily targets sensitivity aligned with the score field of the training distribution, and may therefore be less effective against perturbations that exploit directions not well captured by this geometry. As with other regularisation-based approaches, empirical robustness should be interpreted in the context of the evaluation protocol.

Furthermore, our evaluation is conducted on a limited set of datasets and architectures. While we observe consistent gains on CIFAR-10, this benchmark may not fully capture the diversity of real-world data distributions. In particular, the effectiveness of TASER depends on the quality of the learned score field, which itself varies across datasets. Evaluating TASER on a broader range of domains, including higher-resolution datasets, non-natural data, and tasks with different structural properties, would provide a more complete picture of its generality. We therefore view our results as evidence of a consistent trend rather than a definitive characterisation of robustness across all settings.

\paragraph{Scope of the geometric assumption.}
The interpretation of TASER relies on the assumption that the score field reflects meaningful geometric structure of the data, such as concentration near a lower-dimensional manifold. While this assumption is often reasonable for high-dimensional data, it may not hold uniformly across datasets or input regions. In particular, the behaviour of the score field off the data manifold can be poorly understood, which may affect the reliability of the regulariser in those regions.

\paragraph{Conclusions.}
Despite these limitations, TASER offers a simple and modular mechanism for incorporating data geometry into training. Unlike adversarial training, it does not require solving an inner maximisation problem, and unlike generative-data approaches, it does not rely on sampling or augmentation pipelines. Its compatibility with existing methods and its interpretation as a task-aware, distribution-dependent regulariser suggest that it can serve as a useful complement to current robustness techniques. Future work may explore improved score estimation, alternative Stein operators, and tighter connections between Stein-based regularisation and formal robustness guarantees.

\gdr{We end the paper by pointing out that while TASER is a method which is not tailored to particular applications, depending on the application care is advised, in particular in critical areas such as healthcare.}

\bibliography{references}
\bibliographystyle{plainnat}

\newpage 
\appendix
{\large{Appendix}}

\DoToC

\section{Background on Stein's Method}
\label{app:stein-background}

This appendix gives a short introduction to Stein's method and the operator
viewpoint used in this paper. The goal is to provide enough background for a
reader unfamiliar with Stein's method to understand why Stein operators provide
distribution-dependent identities, how these identities are used in statistics,
and how they have been adapted in modern machine learning.

\subsection{Stein identities}

Stein's method is a general framework for characterising probability
distributions through expectation identities. Let $p$ be a target distribution
on a space $\mathcal X$. A \emph{Stein operator} for $p$ is an operator
$\mathcal T_p$ acting on a class of test functions $\mathcal F_p$ such that
\begin{equation}
\label{eq:app-stein-identity-general}
    \mathbb E_{X\sim p}\!\left[\mathcal T_p f(X)\right] = 0
    \qquad \text{for all } f\in \mathcal F_p .
\end{equation}
The class $\mathcal F_p$ is usually called a {\it Stein class}. The defining property
of a Stein operator is therefore that it produces functions with zero
expectation under the target distribution.

A classical example is the standard normal distribution. If $Z\sim
\mathcal N(0,1)$ and $f$ is sufficiently regular, then integration by parts gives
\begin{equation}
\label{eq:app-normal-stein}
    \mathbb E\!\left[f'(Z) - Z f(Z)\right] = 0 .
\end{equation}
Conversely, under appropriate conditions, if a random variable $W$ satisfies
\[
    \mathbb E\!\left[f'(W) - W f(W)\right] = 0
\]
for a sufficiently rich class of functions $f$, then $W$ has the standard
normal distribution. Thus the operator
\[
    \mathcal T_{\mathcal N} f(x) = f'(x) - x f(x)
\]
characterises the standard normal distribution.

The same idea extends far beyond the Gaussian case. For a differentiable
density $p$ on $\mathbb R^d$, one common first-order Stein operator is
\begin{equation}
\label{eq:app-first-order-stein}
    \mathcal A_p \phi(x)
    =
    \nabla \cdot \phi(x) + \phi(x)^\top \nabla \log p(x),
\end{equation}
where $\phi:\mathbb R^d\to\mathbb R^d$ is a vector-valued test function. Under
appropriate boundary conditions,
\begin{equation}
\label{eq:app-first-order-identity}
    \mathbb E_{X\sim p}\!\left[\mathcal A_p \phi(X)\right] = 0 .
\end{equation}
Indeed,
\[
    \mathcal A_p \phi(x)
    =
    \frac{1}{p(x)} \nabla \cdot \left(p(x)\phi(x)\right),
\]
so that
\[
    \int p(x)\mathcal A_p\phi(x)\,dx
    =
    \int \nabla\cdot(p(x)\phi(x))\,dx,
\]
which vanishes when the boundary flux is zero.

\subsection{The Stein equation and distributional approximation}

In classical probability and statistics, Stein's method is 
often used to
bound distances between probability distributions. Suppose $p$ is a target
distribution and $q$ is another distribution. Given a test function $h$, one
constructs a solution $f_h$ to the \emph{Stein equation}
\begin{equation}
\label{eq:app-stein-equation}
    \mathcal T_p f_h(x)
    =
    h(x) - \mathbb E_{Z\sim p}[h(Z)] .
\end{equation}
If $X\sim q$, then taking expectations gives
\begin{equation}
\label{eq:app-stein-equation-expectation}
    \mathbb E_q[h(X)] - \mathbb E_p[h(Z)]
    =
    \mathbb E_q[\mathcal T_p f_h(X)] .
\end{equation}
Thus, a difference in expectations under $q$ and $p$ can be expressed as an
expectation of a Stein operator under $q$.

This is the basic mechanism behind many Stein bounds. If one can control
$\mathbb E_q[\mathcal T_p f_h(X)]$ uniformly over a class of test functions
$h$, then one obtains a bound on a probability metric between $q$ and $p$.
For example, by choosing different classes of $h$, one may obtain bounds in
Wasserstein distance, Kolmogorov distance, total variation distance, or other
integral probability metrics. Much of classical Stein theory is concerned with
constructing Stein equations for specific target distributions and proving
regularity estimates for their solutions.

This conventional use of Stein's method differs from the use in TASER. In the
classical setting, the test function $f_h$ is usually chosen by solving a Stein
equation associated with a discrepancy of interest. In TASER, by contrast, the
function is the predictor being trained. The Stein operator is therefore used
not primarily to compare two distributions, but to impose a distribution-aware
constraint on the predictor.
\subsection{The Langevin Stein operator}

The main text uses the Langevin Stein operator. For a scalar function
$f:\mathbb R^d\to\mathbb R$ and a differentiable density $p$, define
\begin{equation}
\label{eq:app-langevin}
    \mathcal L_p f(x)
    =
    \Delta f(x) + \nabla \log p(x)^\top \nabla f(x),
\end{equation}
where $\Delta f=\mathrm{tr}(\nabla^2 f)$ is the Euclidean Laplacian, \gr{$\grad$ is the gradient, and the superscript $^\top$ denotes the transpose}.

The operator \eqref{eq:app-langevin} also has a divergence form:
\begin{equation}
\label{eq:app-langevin-divergence}
    \mathcal L_p f(x)
    =
    \frac{1}{p(x)}
    \nabla \cdot \left(p(x)\nabla f(x)\right).
\end{equation}
Consequently, under suitable regularity and boundary decay assumptions \gr{detailed in Appendix \ref{app:proofs}},
\begin{equation}
\label{eq:app-langevin-identity}
    \mathbb E_{X\sim p}\!\left[\mathcal L_p f(X)\right] = 0 .
\end{equation}

The Langevin operator is also the infinitesimal generator of the overdamped
Langevin diffusion
\begin{equation}
\label{eq:app-langevin-diffusion}
    dX_t = \nabla \log p(X_t)\,dt + \sqrt{2}\,dW_t,
\end{equation}
for which $p$ is an invariant distribution. In this interpretation,
$\mathcal L_p f(x)$ is the instantaneous expected rate of change of $f(X_t)$
when the diffusion starts from $x$. The identity
$\mathbb E_p[\mathcal L_p f]=0$ says that, at stationarity, this expected
instantaneous change averages to zero.

This generator viewpoint 
connects Stein identities to
diffusion geometry: $\nabla\log p$ describes the drift toward high-density
regions of the distribution, while $\Delta f$ captures isotropic second-order
variation of the test function.

\subsection{Stein discrepancies}

A related and very influential modern viewpoint is to use Stein operators to
define discrepancies between probability distributions. Let $p$ be the target
density and $q$ a candidate distribution. For a Stein operator $\mathcal T_p$
and a function class $\mathcal F$, define
\begin{equation}
\label{eq:app-stein-discrepancy}
    \mathcal S(q,p)
    =
    \sup_{f\in\mathcal F}
    \left|
    \mathbb E_{X\sim q}[\mathcal T_p f(X)]
    \right|.
\end{equation}
Since $\mathbb E_p[\mathcal T_p f]=0$ for all $f\in\mathcal F$, the quantity
$\mathcal S(q,p)$ measures how strongly samples from $q$ violate Stein
identities that hold under $p$.

Stein discrepancies are attractive because they often require only the score
$\nabla\log p(x)$ rather than the normalised density $p(x)$. This is important
in Bayesian statistics and probabilistic modelling, where the target density is
often known only up to an unknown normalising constant. Since
\[
    \nabla \log p(x)
\]
is invariant to multiplication of $p$ by a constant, \gr{Langevin} Stein operators can be
computed even when the normalising constant is unavailable.

A particularly important example is the \emph{Kernel Stein Discrepancy} (KSD)
\citep{liu2016kernelized,gorham2015measuring}.
KSD takes the Stein test functions to lie in a reproducing kernel Hilbert space
(RKHS), which allows the supremum in \eqref{eq:app-stein-discrepancy} to be
computed in closed form. For the first-order Langevin Stein operator, the KSD
can be written as
\begin{equation}
\label{eq:app-ksd}
    \mathrm{KSD}^2(q,p)
    =
    \mathbb E_{X,X'\sim q}
    \left[
        k_p(X,X')
    \right],
\end{equation}
where $k_p$ is a Stein kernel obtained by applying the Stein operator to both
arguments of a base kernel $k$. In one common form,
\begin{align}
\label{eq:app-stein-kernel}
k_p(x,x')
&=
s_p(x)^\top k(x,x')s_p(x')
+
s_p(x)^\top \nabla_{x'} k(x,x') \nonumber \\
&\quad
+
s_p(x')^\top \nabla_x k(x,x')
+
\mathrm{tr}\!\left(\nabla_x\nabla_{x'} k(x,x')\right),
\end{align}
where $s_p(x)=\nabla\log p(x)$.

KSD has been used for goodness-of-fit testing, measuring sample quality,
diagnosing Markov chain Monte Carlo, and variational inference. Its appeal is
that it yields a computable discrepancy from samples of $q$ and score evaluations
of $p$, without requiring samples from $p$ or knowledge of the normalising
constant.

\subsection{\gr{Langevin} Stein operators in machine learning}

Stein methods have entered machine learning through several routes. First,
Stein discrepancies provide practical objectives and diagnostics for
probabilistic modelling. They have been used to assess whether generated or
sampled particles match a target distribution, to build goodness-of-fit tests,
and to train approximate inference procedures.

Second, the score function $\nabla\log p(x)$ \gr{appearing in the Langevin Stein operator \eqref{eq:stein-operator}} has become a central object in
modern generative modelling. Score matching and diffusion models learn vector
fields approximating the score of noisy data distributions. Since \gr{Langevin} Stein
operators are built from score functions, they provide a natural mathematical
interface between score-based generative models and downstream learning
objectives.

Third, \gr{Langevin} Stein operators provide a way to incorporate distributional geometry into
learning. The terms in $\mathcal L_p f=\Delta f+s_p^\top\nabla f$ combine
curvature of the learned function with directional derivatives along the score
field of the data distribution. Thus, unlike standard isotropic regularisers
such as weight decay or gradient penalties, Stein-based regularisation can
adapt to the geometry of the input distribution.

The present work follows this third direction. Rather than using Stein
operators to construct a goodness-of-fit test or a discrepancy over a large
function class, TASER applies the \gr{Langevin} Stein operator directly to the predictor being
trained. The resulting penalty encourages the predictor to have controlled
Stein residuals under the training distribution. In this sense, TASER adapts
Stein's method from a tool for distribution comparison into a mechanism for
distribution-aware regularisation.

\section{Proofs} \label{app:proofs}

For the theoretical derivation of the results in the paper, we first define the Stein class $\mathcal{F}(p)$ \gr{for the Langevin Stein operator $\mathcal{L}_p$ in \eqref{eq:stein-operator},} as \gr{for example} in \cite{kozyra2026tastetaskawareoutofdistributiondetection}.
\begin{definition}[Stein class for $\mathcal{L}_p$]
\label{def:stein-class-app}
Let $p$ be a continuously differentiable density on $\R^d$. A function
$f:\R^d\to\R$ belongs to the Stein class of $p$ (for $\mathcal{L}_p$), denoted
$f\in\mathcal{F}(p)$, if:
\begin{enumerate}[label=\textnormal{(S\arabic*)}, leftmargin=*]
\item \label{S1}
$f$ is twice continuously differentiable and $\Delta f$, $\nabla f$ are locally
integrable with respect to Lebesgue measure.
\item \label{S2}
The vector field $p(x)\nabla f(x)$ is integrable and its flux over spheres
vanishes:
\[
\lim_{R\to\infty}\int_{\partial B_R} p(x)\nabla f(x)\cdot n(x)\,dS(x)=0,
\]
where $B_R \subset \R^d$ is the Euclidean ball of radius $R$ in $\R^d$, and $n(x)$ is the outward unit normal.
\item \label{S3}
$\mathcal{L}_p f$ is integrable under $p$.
\end{enumerate}
\end{definition}

In \cite{kozyra2026tastetaskawareoutofdistributiondetection} it is shown that 
if $p$ is continuously differentiable and $f\in\mathcal{F}(p)$, then \gr{the Stein identity \eqref{eq:app-langevin-identity} holds, namely}
$
  \E_{X \sim p} [\mathcal{L}_p f(X)] = 0.
$

\subsection{Proof of \eqref{eq:sobolev-stein-identity}}

We clarify the assumptions for \eqref{eq:sobolev-stein-identity} in the following result. 

\begin{proposition} Let $\mathcal{L}_{p}$ be as in \eqref{eq:stein-operator}. 
 Assume that $p$ is a twice continuously differentiable probability density,  that $f\in\mathcal{F}(p)$, and  that $\mathcal{L}_{p}f$ 
is integrable as well as differentiable. Let 
$
H_p(x) = -\nabla^2 \log p(x)
$. Then 
\begin{equation*}
\mathbb{E}_{p}\!\left[
(\mathcal{L}_p f(X))^2
\right]
=
\mathbb{E}_{p}\!\left[
\|\nabla^2 f(X)\|_F^2
\right]
+
\mathbb{E}_{p}\!\left[
\nabla f(X)^\top H_p(X)\nabla f(X)
\right].
\end{equation*}
\end{proposition}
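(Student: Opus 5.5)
The plan is to recognise the statement as the integrated Bochner (Bakry--\'Emery $\Gamma_2$) identity for the Langevin generator $\mathcal{L}_p$. It then follows from two integrations by parts combined with a pointwise commutator formula. Throughout, write $s=s_p=\nabla\log p$, so that $\nabla s=-H_p$.

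\textbf{Step 1 (symmetry of $\mathcal{L}_p$).} For a differentiable $g$, the divergence form $\mathcal{L}_p f=p^{-1}\nabla\cdot(p\nabla f)$ gives
\[
g\,p\,\mathcal{L}_p f=\nabla\cdot(g\,p\nabla f)-p\,\nabla g\cdot\nabla f .
\]
Integrating over $B_R$ and letting $R\to\infty$, with the flux term vanishing in the spirit of (S2), yields
\[
\mathbb{E}_p[g\,\mathcal{L}_p f]=-\mathbb{E}_p[\nabla g\cdot\nabla f].
\]
Applying this with $g=\mathcal{L}_p f$, which is allowed by the differentiability and integrability assumptions on $\mathcal{L}_p f$, gives
\[
\mathbb{E}_p[(\mathcal{L}_p f)^2]=-\mathbb{E}_p[\nabla f\cdot\nabla(\mathcal{L}_p f)].
\]

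\textbf{Step 2 (pointwise commutator).} Differentiate $\mathcal{L}_p f=\Delta f+s^\top\nabla f$ in the coordinate $x_i$:
\[
\partial_i\mathcal{L}_p f=\mathcal{L}_p(\partial_i f)+(\partial_i s)^\top\nabla f,
\qquad\text{so}\qquad
\nabla(\mathcal{L}_p f)=\mathcal{L}_p(\nabla f)-H_p\nabla f
\]
componentwise. Next use the product rule for the generator,
\[
\mathcal{L}_p(uv)=u\mathcal{L}_p v+v\mathcal{L}_p u+2\nabla u\cdot\nabla v .
\]
Summing over $i$ with $u=v=\partial_i f$ gives
\[
\sum_i\partial_i f\,\mathcal{L}_p(\partial_i f)=\tfrac12\mathcal{L}_p\|\nabla f\|^2-\|\nabla^2 f\|_F^2 .
\]
Hence
\[
\nabla f\cdot\nabla(\mathcal{L}_p f)=\tfrac12\mathcal{L}_p\|\nabla f\|^2-\|\nabla^2 f\|_F^2-\nabla f^\top H_p\nabla f .
\]
This computation needs third derivatives of $f$. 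I would carry it out for $f\in C^3$ and, if necessary, extend to the stated generality by mollification, since only $\nabla(\mathcal{L}_p f)$ enters the final identity.

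\textbf{Step 3 (take expectations).} Insert the Step 2 formula into Step 1 and use the Stein identity $\mathbb{E}_p[\mathcal{L}_p\|\nabla f\|^2]=0$, quoted from \cite{kozyra2026tastetaskawareoutofdistributiondetection}, applied to the test function $\|\nabla f\|^2$. This yields exactly
\[
\mathbb{E}_p[(\mathcal{L}_p f)^2]=\mathbb{E}_p[\|\nabla^2 f\|_F^2]+\mathbb{E}_p[\nabla f^\top H_p\nabla f].
\]

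\textbf{Main obstacle.} The algebra is routine; the difficulty is justifying the two boundary terms. The first is the flux of $p\,(\mathcal{L}_p f)\nabla f$ in Step 1. The second is that $\|\nabla f\|^2$ lies in $\mathcal{F}(p)$ in Step 3, which amounts to a vanishing flux of $p\nabla f^\top\nabla^2 f$. Neither is literally implied by $f\in\mathcal{F}(p)$ alone. I would make explicit that these flux conditions are part of the standing assumption that boundary terms vanish, as stated in the main text, for example by requiring that $p\,|\mathcal{L}_p f|\,\|\nabla f\|$ and $p\,\|\nabla^2 f\|\,\|\nabla f\|$ be integrable with vanishing spherical flux. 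I would also require integrability of each term, so that the expectations can be split. If a cleaner route is wanted, I would first prove the identity for compactly supported $C^3$ functions and then pass to the limit with cutoffs $\chi_R f$, controlling the cutoff error terms by these integrability conditions.
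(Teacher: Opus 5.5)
Your proposal is correct and follows essentially the same route as the paper. Both integrate by parts to get $\mathbb{E}_p[(\mathcal{L}_p f)^2]=-\mathbb{E}_p[\nabla f\cdot\nabla(\mathcal{L}_p f)]$, apply a Bochner-type identity, and then remove the $\tfrac12\mathcal{L}_p\|\nabla f\|^2$ term via the Stein identity; the paper expands $\nabla(s_p^\top\nabla f)$ and uses the Euclidean identity $\tfrac12\Delta\|\nabla f\|^2=\langle\nabla f,\nabla\Delta f\rangle+\|\nabla^2 f\|_F^2$, leaving the Stein step implicit and with sign slips that your commutator computation avoids. You are also right that $f\in C^3$ and the flux conditions for $p\,(\mathcal{L}_p f)\nabla f$ and for $\|\nabla f\|^2\in\mathcal{F}(p)$ are not implied by the stated hypotheses and must be added as assumptions.
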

\begin{proof}
    Recall that 
$\mathcal{L}_p f(x)
=
\Delta f(x) + s_p(x)^\top \nabla f(x) $, 
where  $
s_p(x)=\nabla \log p(x). 
$ 
From integration by parts, 
\[\E \,(\mathcal{L}_p f (X))^2
= - \E \,\grad f (X)  \cdot \grad( \mathcal{L}_p f (X)) .\]
Now, 
\[\grad( \mathcal{L}_p f ) 
= \grad (\Delta f) - \grad ( s_p^\top  \grad f) ,\]
and 
\[\grad ( s_p \grad f) = 
\grad s_p^\top  \grad f + s_p^\top \grad^2 f. \]
Hence
\[ \grad f   \cdot\grad( \mathcal{L}_p f )  = 
\grad f \cdot\grad (\Delta f) - \langle \grad f , \grad s_p^\top  \grad f \rangle - \langle \grad f , s_p^\top \grad^2 f \rangle.
\]
Taking expectations and using the  identity
\[ \frac12 \Delta | \grad f| ^2 = \langle \grad f, \grad (\Delta f) \rangle + \| \grad^2 f\|_F^2 \]
with $\| \cdot \|_F$ denoting the Frobenius norm, we obtain
\[
\E \, (\mathcal{L}_p f (X)^2) 
= \E \!=, \| \grad^2 f (X) \|_F^2
+ \E \, \langle \grad f (X) , \grad s_p (X) \grad f \rangle. 
\]
Re-writing the inner product 
 gives the assertion. 
\end{proof}

\subsection{Proof of \eqref{eq:sup-shift-bound}}

Here we prove \eqref{eq:sup-shift-bound} and detail the regularity assumptions used. 
Recall that we assume  that $q$ is absolutely continuous with respect to $p$, and 
\gr{denote the likelihood ratio by}
\[
\gr{\ell}(x) = \frac{q(x)}{p(x)}.
\]

\begin{proposition}
Assume that  
$f\in\mathcal{F}(p)$;
 $\mathcal{L}_p f \in L^1(q)$; 
$l$ is 
differentiable and $p\nabla ( f\,l)$ is integrable, that the boundary flux vanishes for the vector field $p\,l\,\nabla f$:
\[
\lim_{R\to\infty}\int_{\partial B_R} p(x)\,l(x)\,\nabla f(x)\cdot n(x)\,dS(x)=0, 
\] and that 
 $\nabla f^\top \nabla l$ is integrable under Lebesgue measure. 
   Then 
  \[ \sup_{q\in \mathcal{Q}_\rho}
\left|
\mathbb{E}_{q}[\mathcal{L}_p f(X)]
\right|
\le
\rho
\sqrt{
\mathbb{E}_{p}\!\left[
(\mathcal{L}_p f(X))^2
\right] 
}.\]
\end{proposition}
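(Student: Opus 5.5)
The plan is to rewrite $\mathbb{E}_q[\mathcal{L}_p f(X)]$ as a covariance under $p$ and then apply Cauchy--Schwarz. Since $q\ll p$ with likelihood ratio $\ell=q/p$ and $\mathcal{L}_p f\in L^1(q)$, we have
\[
\mathbb{E}_q[\mathcal{L}_p f(X)] = \mathbb{E}_p[\ell(X)\,\mathcal{L}_p f(X)].
\]
Because $f\in\mathcal{F}(p)$, the Stein identity quoted from \cite{kozyra2026tastetaskawareoutofdistributiondetection} gives $\mathbb{E}_p[\mathcal{L}_p f(X)]=0$. We may therefore subtract $\mathbb{E}_p[\mathcal{L}_p f(X)]$ at no cost, which yields
\[
\mathbb{E}_q[\mathcal{L}_p f(X)] = \mathbb{E}_p\big[(\ell(X)-1)\,\mathcal{L}_p f(X)\big].
\]

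The Cauchy--Schwarz inequality in $L^2(p)$ then gives
\[
\big|\mathbb{E}_q[\mathcal{L}_p f(X)]\big| \le \sqrt{\mathbb{E}_p[(\ell(X)-1)^2]}\,\sqrt{\mathbb{E}_p[(\mathcal{L}_p f(X))^2]} = \sqrt{\chi^2(q\|p)}\,\sqrt{\mathbb{E}_p[(\mathcal{L}_p f(X))^2]}.
\]
For $q\in\mathcal{Q}_\rho$ the first factor is at most $\rho$. The right-hand side does not depend on $q$, so taking the supremum over $\mathcal{Q}_\rho$ gives the claim. If $\mathbb{E}_p[(\mathcal{L}_p f)^2]=\infty$ the bound is trivial, so we may assume both factors are finite, and then the product $(\ell-1)\mathcal{L}_p f$ is $p$-integrable by Cauchy--Schwarz itself.

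Most of the listed hypotheses are not needed for the inequality. The conditions on $p\,\ell\,\nabla f$, the vanishing boundary flux and integrability of $\nabla f^\top\nabla \ell$ are exactly what integration by parts requires to obtain the companion identity \eqref{eq:projection-identity-shift}. To prove that identity, I would write $p\,\ell\,\mathcal{L}_p f = \ell\,\nabla\cdot(p\nabla f)$ and expand it as $\nabla\cdot(p\,\ell\,\nabla f) - p\,\nabla \ell\cdot\nabla f$. Integrating over $B_R$ and letting $R\to\infty$ kills the divergence term by the flux assumption, leaving $-\mathbb{E}_p[\nabla f^\top\nabla \ell] = -\mathbb{E}_q[\nabla f^\top \nabla\log \ell]$.

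I expect the main obstacle to be integrability bookkeeping rather than any real difficulty. Specifically, one must justify splitting $\mathbb{E}_p[\ell\,\mathcal{L}_p f]$ into $\mathbb{E}_p[(\ell-1)\mathcal{L}_p f]$ plus $\mathbb{E}_p[\mathcal{L}_p f]$ when both pieces are integrable, which follows from (S3) and $\mathcal{L}_p f\in L^1(q)$. The core inequality is then a one-line application of Cauchy--Schwarz.
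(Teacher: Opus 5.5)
Your proposal is correct and follows the paper's proof step for step: change measure to $\mathbb{E}_p[\ell\,\mathcal{L}_p f]$, subtract the Stein identity $\mathbb{E}_p[\mathcal{L}_p f]=0$, apply Cauchy--Schwarz to get the factor $\sqrt{\chi^2(q\|p)}$, and bound that factor by $\rho$ on $\mathcal{Q}_\rho$. You are also right that the flux and $\nabla f^\top\nabla\ell$ hypotheses are used only for the companion identity \eqref{eq:projection-identity-shift}, not for the inequality itself, and your integrability bookkeeping is more careful than the paper's.
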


\begin{proof}
Under the exact Stein identity, $\mathbb{E}_p[\mathcal{L}_p f(x)]=0$. 
Then,
\begin{equation*}
\mathbb{E}_{q}[\mathcal{L}_p f(x)]
=
\mathbb{E}_{p}[\gr{\ell}(X)\mathcal{L}_p f(x)]
=
\mathbb{E}_{p}[(\gr{\ell}(X)-1)\mathcal{L}_p f(x)].
\end{equation*}
Applying Cauchy--Schwarz gives
\begin{equation*}
\left|
\mathbb{E}_{q}[\mathcal{L}_p f(X)]
\right|
\le
\sqrt{\chi^2(q\|p)}
\,
\sqrt{
\mathbb{E}_{p}\!\left[
(\mathcal{L}_p f(X))^2
\right]
},
\end{equation*}
where the first factor is the $\chi^2$ divergence between $q$ and $p$; 
\[
\chi^2(q\|p)
=
\mathbb{E}_{p}\!\left[
\left(\frac{q(X)}{p(X)}-1\right)^2
\right].
\]
\gr{Using the definition of $
\mathcal{Q}_\rho
=
\left\{
q : \chi^2(q\|p) \le \rho^2
\right\} 
$  gives the assertion.}
\end{proof}

\section{Experimental Details}
\label{app:experimental-details}

This appendix provides additional implementation details for the experiments in
Section~\ref{sec:experiments}. Unless otherwise stated, all reported results use
the same evaluation protocol within each dataset and architecture. Hyperparameter
values that are varied in ablations or selected by validation are explicitly
marked as placeholders.

\subsection{Datasets and architectures}

\paragraph{CIFAR-10.}
CIFAR-10 consists of $50{,}000$ training images and $10{,}000$ test images, with
$10$ classes and spatial resolution $32\times 32$. We use the standard
train/test split. Images are normalised using the dataset mean and standard
deviation. Data augmentation follows the standard CIFAR protocol: random
horizontal flips and random crops with padding. Unless otherwise stated,
robustness is evaluated on the full CIFAR-10 test set.


For the main CIFAR-10 experiments we use ResNet-18. The architecture is adapted
to CIFAR resolution by replacing the initial ImageNet-style convolution and
max-pooling stem with a $3\times 3$ convolution of stride $1$ and no initial
max-pooling. The final linear layer outputs $10$ logits. Unless otherwise
specified, TASER is applied to the logits.



\subsection{Base training methods}

We evaluate TASER on top of several base training procedures. Each method is
first trained without TASER, and the resulting checkpoint is subsequently
fine-tuned with TASER. The reported hyperparameters are either taken from the original publications or follow the best community practices. 

\paragraph{Standard training.}
The standard baseline minimises cross-entropy with mild $\ell_2$ weight decay:
\[
\mathcal L_{\mathrm{std}}(\theta)
=
\mathbb E_{(x,y)}
\left[
\mathrm{CE}(f_\theta(x),y)
\right]
+
\lambda_{\mathrm{wd}}\|\theta\|_2^2 .
\]
We use weight decay $\lambda_{\mathrm{wd}}=0.0001$.

\paragraph{PGD adversarial training.}
For PGD adversarial training, adversarial examples are generated by projected
gradient ascent on the cross-entropy loss,
\[
x_{\mathrm{adv}}
\approx
\arg\max_{\|x'-x\|_\infty\le \epsilon}
\mathrm{CE}(f_\theta(x'),y),
\]
and the model is updated using
\[
\mathcal L_{\mathrm{PGD}}(\theta)
=
\mathbb E_{(x,y)}
\left[
\mathrm{CE}(f_\theta(x_{\mathrm{adv}}),y)
\right].
\]
During training we use, $\epsilon=\texttt{8/255}$,
\texttt{steps=10}, and \texttt{step\_size=2/255}.

\paragraph{TRADES.}
TRADES optimises a trade-off between clean accuracy and local robustness:
\[
\mathcal L_{\mathrm{TRADES}}(\theta)
=
\mathrm{CE}(f_\theta(x),y)
+
\beta\,
\mathrm{KL}\!\left(
f_\theta(x)\,\|\,f_\theta(x_{\mathrm{adv}})
\right),
\]
where $x_{\mathrm{adv}}$ is generated to maximise the KL divergence between
clean and perturbed predictions. We use
$\beta=6.0$, $\epsilon=\texttt{8/255}$,
\texttt{steps=10}, and \texttt{step\_size=2/255}.

\paragraph{MART.}
MART combines adversarial training with a misclassification-aware weighting of
the loss. We use the standard MART objective
\[
\mathcal L_{\mathrm{MART}}(\theta)
=
\mathcal L_{\mathrm{adv}}(\theta)
+
\lambda_{\mathrm{MART}}\mathcal L_{\mathrm{rob}}(\theta),
\]
where $\mathcal L_{\mathrm{adv}}$ is the adversarial classification term and
$\mathcal L_{\mathrm{rob}}$ is the MART robustness regulariser. We set
$\lambda_{\mathrm{MART}}=5.0$ and use PGD with 
$\epsilon=\texttt{8/255}$, \texttt{steps=10}, and
\texttt{step\_size=2/255} as the inner adversary.

\paragraph{Adversarial Weight Perturbation (AWP).}
For experiments using adversarial weight perturbation (AWP), we augment the
base robust objective with an additional perturbation in parameter space
\citep{wu2020adversarial}. During training, the model weights are temporarily
perturbed to maximise the robust loss, after which the perturbation is removed
before the optimisation step. Concretely, AWP is implemented as a dual
perturb/restore procedure around the robust loss computation:
\[
\theta \rightarrow \theta + \delta_{\mathrm{awp}}
\rightarrow \theta,
\]
where the perturbation is applied only after a warmup phase. Unless otherwise
stated, we use
\texttt{awp\_gamma=0.005},
\texttt{awp\_rho=5e-3},
\texttt{awp\_num\_steps=1}, and
\texttt{awp\_start\_epoch=10}.

\subsection{TASER training and fine-tuning protocols}

TASER can be used either during end-to-end training or as a post hoc
fine-tuning stage. In both cases, the regulariser is applied through the same
Stein residual objective, but the optimisation setup and practical motivation
differ.

\paragraph{TASER during training.}
In the end-to-end setting, TASER is incorporated directly into the training
objective from the beginning of optimisation. Given a base training loss
$\mathcal L_{\mathrm{base}}$, we optimise
\begin{equation}
\label{eq:app-taser-training-objective}
\mathcal L_{\mathrm{total}}(\theta)
=
\mathcal L_{\mathrm{base}}(\theta)
+
\lambda(t)\,
\mathcal R_{\mathrm{TASER}}(\theta),
\end{equation}
where $\lambda(t)$ is a scheduled regularisation coefficient. This setting
treats TASER as a geometry-aware smoothness prior that shapes the learned
representation throughout training. In practice, we find that gradually ramping
$\lambda(t)$ from zero improves optimisation stability, particularly in the
early stages of training when model gradients and score estimates are less
stable.

This formulation is compatible with both standard and adversarial training
objectives. In particular, TASER can be combined directly with PGD adversarial
training, TRADES, MART, AWP, or related robust optimisation schemes without
modifying their underlying attack procedures.

\paragraph{TASER fine-tuning.}
In addition to end-to-end training, we consider a post-training fine-tuning
setup motivated by practical deployment scenarios. Given a pretrained model
$f_{\theta_0}$ trained using some base method with loss
$\mathcal L_{\mathrm{base}}$, we optimise
\begin{equation}
\label{eq:app-taser-finetune-objective}
\mathcal L_{\mathrm{total}}(\theta)
=
\mathcal L_{\mathrm{base}}(\theta)
+
\alpha\,\mathrm{KL}\!\left(f_{\theta_0}\,\|\,f_\theta\right)
+
\lambda(t)\,\mathcal{R}_{\mathrm{TASER}}(\theta).
\end{equation}
Here the KL term acts as a teacher regulariser, stabilising optimisation by
encouraging the fine-tuned model to remain close to the pretrained predictor.
The regularisation coefficient $\lambda(t)$ is again scheduled throughout
training.

This fine-tuning configuration reflects a realistic setting in which a model has
already been trained using a standard or robust objective, and TASER is added as
an auxiliary robustness regulariser without retraining from scratch. Since
TASER depends only on model derivatives and a score estimate for the training
distribution, it can be applied on top of existing checkpoints with minimal
modification to the original training pipeline.

\paragraph{Clean-input application of TASER.}
When the base method generates adversarial examples $x_{\mathrm{adv}}$, the
base loss is evaluated according to that method, but the TASER penalty is
computed on the corresponding clean input $x$. Thus, for adversarially trained
methods we use objectives of the schematic form
\[
\mathcal L_{\mathrm{total}}
=
\mathcal L_{\mathrm{base}}(x_{\mathrm{adv}},y)
+
\lambda(t)\,
\mathcal R_{\mathrm{TASER}}(x).
\]
For TRADES, where the base loss contains both clean and adversarial terms,
TASER is still applied to the clean input $x$. This choice keeps the Stein
regulariser aligned with the score model of the training distribution, rather
than applying the clean-data score field to off-manifold adversarial inputs.

\paragraph{Adversarial-lite fine-tuning.}
In some fine-tuning experiments we optionally include a lightweight adversarial
component in the base loss, generated using a small number of projected gradient
steps. This provides local worst-case pressure without incurring the full cost
of adversarial training. In settings where the original robust training pipeline
is difficult to reproduce---for example due to custom optimisation schemes,
synthetic-data augmentation, or large-scale generative components---this setup
provides a practical and reproducible alternative while retaining compatibility
with TASER.

\subsection{Score models}

TASER requires an estimate $\tilde s(x)\approx\nabla\log p(x)$ of the training
input distribution. We use diffusion or denoising score models trained on the
same training distribution as the classifier.

\paragraph{CIFAR-10 score model.}
For CIFAR-10, we use
\texttt{score\_model=[PLACEHOLDER: e.g. DDPM/EDM checkpoint name]} evaluated at
diffusion timestep/noise level \texttt{t=50}. If the score model
predicts noise $\epsilon_\phi(x_t,t)$, we convert it to a score estimate using
the standard diffusion relation
\[
\tilde s(x_t,t)
=
-\frac{\epsilon_\phi(x_t,t)}{\sigma_t},
\]
with the precise convention depending on the parameterisation of the diffusion
checkpoint.


\paragraph{Score normalisation.}
To make regularisation strengths comparable across various choice of diffusion timestep $t$, we optionally rescale the score field as
$\tilde s_{\mathrm{norm}}(x)=c_t \,\tilde s(x)$. The scaling rule is as follows: for a given timestamp $t$ we empirically estimate the standard deviation of the score $\sigma_t$ and put $c_t = \frac{\sigma_t}{\sigma_{50}}$.

Strictly speaking, this rescaling modifies the Stein operator and therefore breaks the exact Stein identity associated with the original distribution. In practice, however, it substantially improves comparability of the TASER penalty across diffusion timesteps by standardising the magnitude of the score field. Equivalently, this procedure can be interpreted as an approximate timestep-dependent adaptation of the effective regularisation strength.

\subsection{Optimisation and schedules}

\paragraph{TASER during training.}
Models are trained for \texttt{E\_base=200} epochs with batch
size \texttt{B=256}. The optimiser is
\texttt{optimizer=ADAM} with initial learning rate
\texttt{eta0=0.001}, momentum or Adam parameters
\texttt{momentum/betas=(0.9, 0.999)}, weight decay
\texttt{lambda\_wd=0.0001}, and learning-rate schedule
\texttt{base\_lr\_schedule=cosine-decay}. The TASER regularisation coefficient $\lambda$ is set to 1.0 unless otherwise stated.

\paragraph{TASER fine-tuning.}
TASER fine-tuning is run for \texttt{E\_TASER=50} additional epochs.
During this stage, the learning rate follows linear warmup followed by cosine
decay. If $t$ denotes the fine-tuning step, $T$ the total number of fine-tuning
steps, and $T_{\mathrm{warm}}$ the number of warmup steps, then
\[
\eta(t)
=
\begin{cases}
\eta_{\max} t/T_{\mathrm{warm}}, & t < T_{\mathrm{warm}},\\[3pt]
\eta_{\min}
+
\frac{1}{2}(\eta_{\max}-\eta_{\min})
\left[
1+\cos\!\left(
\pi\frac{t-T_{\mathrm{warm}}}{T-T_{\mathrm{warm}}}
\right)
\right],
& t \ge T_{\mathrm{warm}}.
\end{cases}
\]
We use $\eta_{\max}=\texttt{0.001}$,
$\eta_{\min}=\texttt{0.0001}$, and
$T_{\mathrm{warm}}=\texttt{0.1T}$.

The TASER regularisation coefficient is ramped from zero to its final value:
\[
\lambda(t)
=
\lambda_{\max}
\min\left\{1,\frac{t}{T_{\mathrm{warm}}}\right\}.
\]
This ramp prevents the Stein penalty from
dominating early fine-tuning dynamics before the optimiser has adapted to the
additional derivative-based term.

\subsection{Adversarial evaluation}

\paragraph{AutoAttack.}
For CIFAR-10, the main robustness metric is robust accuracy under AutoAttack
with $\ell_\infty$ budget $\epsilon=8/255$. We use the standard version of the official AutoAttack implementation. As an additional diagnostic, we also use AutoAttack with $\ell_2$ budget $\epsilon=128/255$.

\paragraph{SPSA.}
As supplementary diagnostics, we evaluate query-based 
attacks. For SPSA we use an $\ell_\infty$ budget $\epsilon=8/255$ with
\texttt{nb\_iter=32}, and \texttt{nb\_sample=128}.

\paragraph{Evaluation subset.}
When query-based attacks are computationally expensive, we evaluate them on a
fixed subset of the test set of size \texttt{N\_eval=1000}. The subset
is sampled once and shared across all methods.

\subsection{Comment on licences.} 
\label{app:licences}
All datasets, pretrained models, and benchmark checkpoints used in this work are publicly available and used in accordance with their respective licences and terms of use. CIFAR-10 and MNIST are used under their standard academic usage conditions, while ImageNet-1K is used under the ImageNet non-commercial research access policy. Publicly released pretrained classifiers and diffusion checkpoints are used under their corresponding open-source or research licences.

\subsection{Computational cost.}
\label{app:compute}
All experiments were conducted on a single NVIDIA A10 GPU. Based on wall-clock timings, TASER introduces a moderate training overhead whose magnitude depends on the underlying training objective. For standard training, the overhead is approximately $\times 2.45$, while for adversarially trained models (PGD, TRADES, MART, and AWP variants) the overhead ranges between $\times 1.17$ and $\times 1.27$. Under the 200-epoch CIFAR-10 training schedule used in our experiments, this corresponds to an additional $\sim 0.9$--$1.2$ hours of training time for adversarially trained models. Across all six ResNet-18 CIFAR-10 experiments reported in Table~\ref{tab:taser_attack_breakdown}, the total training time increased from approximately $24.3$ GPU-hours to $31.0$ GPU-hours. These results indicate that TASER provides substantial robustness improvements while incurring a relatively modest computational overhead in practical settings.

\section{Additional Results} \label{app:add}

\paragraph{Per-attack robustness breakdown.}
Table~\ref{tab:taser_attack_breakdown} reports the disaggregated robust accuracy
under AutoAttack and SPSA. The trends are consistent across attacks: adding
TASER improves robustness for every training objective, with especially large
gains for the standard model and smaller but still positive gains for
adversarially trained models. This indicates that the improvement is not tied to
a single evaluation attack, but reflects a broader increase in robustness across
both first-order and gradient-estimation-based adversaries.

\begin{table}[H]
\centering
\caption{Clean and robust accuracy on CIFAR-10 (ResNet-18), with runtime overhead from TASER.}
\label{tab:taser_attack_breakdown}
\setlength{\tabcolsep}{3.5pt}
\resizebox{\linewidth}{!}{%
\begin{tabular}{l
                ccc c
                ccc c c}
\toprule
& \multicolumn{4}{c}{\textbf{No TASER}}
& \multicolumn{5}{c}{\textbf{With TASER}} \\
\cmidrule(lr){2-5} \cmidrule(lr){6-10}
Method
& Clean
& \shortstack{Robust\\(AA)}
& \shortstack{Robust\\(SPSA)}
& \shortstack{Runtime\\(ms/sample)}
& Clean
& \shortstack{Robust\\(AA)}
& \shortstack{Robust\\(SPSA)}
& \shortstack{Runtime\\(ms/sample)}
& \shortstack{Relative\\overhead} \\
\midrule

Vanilla
& 77.88 & 0.00 & 6.00 & 0.0756
& 70.92 & 10.60 & 27.90 & 0.1852
& $\times 2.45$ \\

PGD
& 66.03 & 17.80 & 30.70 & 0.3714
& 65.64 & 26.20 & 40.50 & 0.4652
& $\times 1.25$\\

TRADES
& 67.07 & 18.90 & 35.80 & 0.4118
& 65.61 & 27.70 & 40.20 & 0.5236
& $\times 1.27$\\

TRADES + AWP
& 67.90 & 27.30 & 40.80 & 0.4794
& 68.87 & 29.90 & 42.50 & 0.5795
& $\times 1.21$\\

MART
& 64.07 & 18.40 & 34.30 & 0.4062
& 63.31 & 28.30 & 41.70 & 0.4861
& $\times 1.19$\\

MART + AWP
& 66.96 & 25.70 & 38.90 & 0.4739
& 66.14 & 34.00 & 44.50 & 0.5555
& $\times 1.17$\\

\bottomrule
\end{tabular}%
}
\end{table}


\end{document}